\author{%
Myriam Tami$^1$, Marianne Clausel$^2$, Emilie Devijver$^1$, Adrien Dulac$^1$,  Eric Gaussier$^1$,\\ Stefan Janaqi$^3$ \& Meriam Chebre$^4$\\[1ex]
\textit{ 
$^1$ University Grenoble Alpes, CNRS, Grenoble INP\footnote{Institute of Engineering Univ. Grenoble Alpes}, LIG, F-38000 Grenoble France}\\
\textit{ $^2$ University of Lorraine, Nancy · IEC - Institut Elie Cartan, 54052 Nancy France}\\
\textit{ $^3$ Ecole des Mines d’Alès, 6 avenue de Clavières, F-30319 Ales Cedex France}\\
\textit{ $^4$ TOTAL S.A., 24 cours Michelet - Défense 10, 92069 Paris La Défense Cedex France
}
}
\begin{document}

\title{\Large Uncertain Trees: Dealing with Uncertain Inputs in Regression Trees}

\date{}
\maketitle

\begin{abstract} \small\baselineskip=9pt Tree-based ensemble methods, as Random Forests and Gradient Boosted Trees, have been successfully used for regression in many applications and research studies. Furthermore, these methods have been extended in order to deal with uncertainty in the output variable, using for example a quantile loss in Random Forests \cite{meinshausen2006quantile}. To the best of our knowledge, no extension has been provided yet for dealing with uncertainties in the input variables, even though such uncertainties are common in practical situations.
We propose here such an extension by showing how standard regression trees optimizing a quadratic loss can be adapted and learned while taking into account the uncertainties in the inputs. By doing  so, one no longer assumes that an observation lies into a single region of the regression tree, but rather that it belongs to each region with a certain probability. Experiments conducted on several data sets illustrate the good behavior of the proposed extension.
\end{abstract}

\section{Introduction}

Most measures carried out in the real world, \textit{e.g.}, by sensors embedded in different machines or by analyses of samples, are uncertain if not erroneous in some cases. This uncertainty may be due to the generating process of the samples being measured or from the intrinsic limitations of any measurement process. Considering such measures, that constitute many of the data sets used in data science applications in both industry and academy, as certain is thus in the best case a naive position. Our experiments illustrate this point inasmuch as the method we propose here to handle uncertainty outperforms standard approaches on several, real benchmark data sets.

However, if several methods have been developed to obtain uncertainty estimates from data sets, very few studies have been devoted to designing data science methods that can deal with such uncertainties. We address this problem here in the context of regression trees, a popular machine learning method at the basis of widely used ensemble methods as Random Forests.

In this context, recent studies have focused on providing conditional quantiles, as opposed to conditional means, so as to better represent the output variable and avoid the uncertainty inherent to point estimates. The work on quantile regression forests developed by Meinshausen \cite{meinshausen2006quantile} is a good illustration of this. We take here a different approach and directly model the uncertainty of the input variables in the regression trees we consider. This leads to a regression tree in which observations no longer belong to a single leaf. Instead, each observation has a non-null probability of being assigned to any leaf of the tree. The construction process associated to such trees is slightly more complex than the ones of standard trees (it involves in particular the inversion of a $K \times K$ matrix, where $K$ is the number of leaves of the tree), but the improvements obtained on the prediction fully justify this additional complexity, as shown in Section~\ref{sec:exps} on both benchmark and modified (with an additional noise) data sets.

The idea of including information on the uncertainty of the input data in a regression method is not new and is related to uncertainty propagation and sensitivity analysis. Several authors have indeed proposed to integrate uncertainties on the inputs in different regression methods, as multivariate linear models \cite{reis2005} or neural networks \cite{gal2016}. To the best of our knowledge, our approach is the first one to address this problem in the context of regression trees and ensemble methods based on such trees, as Random Forests.

The remainder of the paper is organized as follows: Section~\ref{sec:model} presents the general model we rely on and its main properties; Section~\ref{sec:algo} then describes the algorithms for constructing the regression tree and the associated prediction rule, while Section~\ref{sec:exps} presents the experiments conducted and the numerical results obtained. Finally, Section~\ref{sec:disc} positions our work wrt previous studies while Section~\ref{sec:concl} concludes the paper.

\section{Regression trees with uncertain input}
\label{sec:model}

Let $Y$ be an output random variable and $\boldsymbol{X} = (X^1,\cdots,X^p)$ a $p$-dimensional input random vector. A classical question is to find some relationship between $\boldsymbol{X}$ and $Y$, estimating the so called link function $f$ involved in the model $Y=f(\boldsymbol{X})+\epsilon$.

Tree-based ensemble methods, as Random Forests or Gradient Boosted Trees, are popular machine learning methods, developed to address the above regression problem \cite{HastieTibshiraniFriedman}. In these methods, a set of regressors are constructed and aggregated in a convenient way. The building blocks are decision regression trees \cite{breiman1984classification}, which are defined from a partition of the space ${\cal X}$ of input variables into $K$ regions $(R_k)_{1\leq k \leq K}$ obtained by dyadic splits minimizing a risk function. A weight $\gamma_k$ is associated to each region leading to a piece-wise predictor, for a new input $\boldsymbol{x} \in \mathcal{X}$, of the form:
\begin{align}
T\left(\boldsymbol{x}; \Theta\right) = \sum_{k=1}^K \gamma_k \mathds{1}_{\lbrace \boldsymbol{x} \in R_k \rbrace}
\label{treeClassical}
\end{align}
where $\Theta = (R_k, \gamma_k)_{1\leq k \leq K}$ is the set of parameters, learned from a training data set, defining the tree. Both categorical and quantitative inputs can in theory be considered. For the sake of simplicity, we however focus in this study on quantitative inputs, thus considering that ${\cal X} = \mathbb{R}^p$.

To deal with uncertainty in the inputs, we introduce an auxiliary latent random vector $\boldsymbol{U}$ representing the true value of the data and consider the general regression function that relates $\boldsymbol{U}|\boldsymbol{X}$ to $Y$ through $Y=f\left(\boldsymbol{U}|\boldsymbol{X}\right)+\epsilon$. The standard regression model is obtained from this general model by considering that $U^j | X^j$ is a Dirac at $X^j=x^j$ (or equivalently is Gaussian with mean $x^j$ and variance $0$). We further assume here that the variables are independent of each other and that the true measure given the observation, on each variable, is Gaussian, leading to the following complete model:
\begin{equation}
\left\{
\begin{array}{l}
Y=f\left(\boldsymbol{U}|\boldsymbol{X}\right)+\epsilon, \\
U^j | X^j=x^j \sim {\cal N} \left(x^j, \sigma_{U_j}^2\right) \text{ for } 1\leq j \leq p.
\end{array}
\right.
\label{model}
\end{equation}
The Gaussian distribution is widespread, can be used to approximate several other distributions and is easy to manipulate, hence its use here. Other distributions can nevertheless be considered for the measurement error, but this is beyond the scope of this study. In the remainder, we will denote $\boldsymbol{\sigma}_{\boldsymbol{U}}^2 = ({\sigma}_{U_1}^2, \ldots, {\sigma}_{U_p}^2)$ the vector of variances of the Gaussian distributions. In practical situations, these variances may be given (for example when the data corresponds to measurements from machines for which the uncertainty is known) or may be directly learned from the data. 

When one is dealing with uncertain inputs, and we want to stress again that this is the general situation, one can no longer assume that observations are hard assigned to regions. Instead, each observation has a probability of being associated to each region, leading to the following prediction rule that directly generalizes Eq.~\eqref{treeClassical}:
\begin{align}
    T_{\text{un}}\left(\boldsymbol{x}; \Theta\right) = \sum_{k=1}^K \gamma_k \mathbb{P}( \boldsymbol{U}\in R_k | \boldsymbol{X} =  \boldsymbol{x}, \boldsymbol{\sigma}_{\boldsymbol{U}}^2).
    \label{treeUncertainty}
\end{align}
Note that the set of parameters $\Theta$ now includes the variances $\boldsymbol{\sigma}_{\boldsymbol{U}}^2$. In addition, because of the independence assumption at the basis of the model retained, one has, for $1\leq k\leq K$:
\begin{equation}
\mathbb{P}(\boldsymbol{U}\in R_k | \boldsymbol{X} =  \boldsymbol{x}, \boldsymbol{\sigma}_{\boldsymbol{U}}^2) = \prod_{j=1}^{p} \mathbb{P}(U^j \in R_{k}^j| X^j = x^j,\sigma_{U_j}^2), \nonumber
\end{equation}
where the interval $R_k^j$ corresponds to the region  $R_k$ projected onto the $j$th variable. We now turn to the procedure for learning the parameters of the model.

\subsection{Estimation procedure}

The estimation procedure of the parameters is based, in this study, on the minimization of the empirical quadratic risk, which is the standard risk considered in regression trees. More precisely, for the learning set defined by:
 \begin{align}
    {\cal L}_n&=\left\{\left(x_i^1,\cdots,x_i^{p} , y_i\right)_{1\leq i \leq n}
    \right\},
\label{learning_set}
\end{align}
with $(\boldsymbol{x}_i, y_i)_{1\leq i \leq n}$ the observed sample, we define the empirical risk:
\begin{align}
    R_{emp}(T_{\text{un}}(.;\hat{\Theta}), {\cal L}_n) &= 
    \frac{1}{n} \sum_{i=1}^n (y_i - T_{\text{un}}(\boldsymbol{x}_i;\hat{\Theta}))^2
\label{empiricalRisk}
\end{align}
where $T_{\text{un}}$ has been introduced in Eq.~\eqref{treeUncertainty}. This criterion has to be minimized on the training set wrt the parameters denoted by $\Theta~=~\left\{(R_k, \gamma_k)_{1\leq k \leq K},\boldsymbol{\sigma}_{\boldsymbol{U}}^2\right\} $: regions of the tree, associated weights, and the variances of  the vector $\boldsymbol{U|X}$. To do so, we introduce  the matrix $P \in M_{n,K} (\mathbb{R})$\footnote{$P$ depends on the number of regions considered. It is thus a dynamic matrix that evolves during the construction of the tree. For clarity sake, we do not explicit this dependence in our notation but the reader has to keep this in mind.} defined, for $1\leq i\leq n$ and for $1\leq k \leq K$, by:
\begin{eqnarray}\label{eq:def-P}
P_{i,k} & = &\mathbb{P}\left(\boldsymbol{U}_i \in R_{k}| \boldsymbol{X}_i = \boldsymbol{x}_i ,\boldsymbol{\sigma}_{\boldsymbol{U}}^2\right) \nonumber \\
 & = & \prod_{j=1}^{p} \mathbb{P}\left(U_i^j \in R_{k}^j| X^j_i = x^j_i,\sigma_{U_j}^2 \right). \nonumber
\end{eqnarray}

\noindent \textbf{Estimating $\boldsymbol{\gamma}$ --} It is easy to see that when fixing the regions $(R_k)_{1\leq k \leq K}$ and the vector of variances $\boldsymbol{\sigma}_{\boldsymbol{U}}^2 $, minimizing Eq.~\eqref{empiricalRisk} with respect to $\boldsymbol{\gamma}$ corresponds to a weighted average of $y_1,\ldots, y_n$, in a way similar to the linear regression model if $P^T P$ is not singular:
\begin{align}
    \label{notre_estimateur_gamma_k}
       \hat{\boldsymbol{\gamma}} =& \underset{\boldsymbol{\gamma} \in \mathbb{R}^K}{\operatorname{argmin}} \left\{ R_{emp}(T_{\text{un}}(.;\hat{\Theta}), {\cal L}_n) \right\} \\
       =& \left(P^T P\right)^{-1}  P^T  \boldsymbol{y},\nonumber  
\end{align}
where $\boldsymbol{y}$ denotes the vector of $n$ univariate outputs $y_1,\ldots, y_n$.
Indeed, by definition:
\begin{align*}
    &R_{emp}(T_{\text{un}}(.;{\Theta}), {\cal L}_n)
    = 
    \frac{1}{n} \sum_{i=1}^n \left(y_i - \sum_{k=1}^K \gamma_k {P}_{i,k}\right)^2.
\end{align*}
Differentiating wrt $\gamma_k$, for all $1\leq k \leq K$, one gets:
\begin{align*}
   &\frac{\partial}{\partial{\gamma}_{k}} R_{emp}\left( T_{\text{un}}(.;{\Theta}), {\cal L}_n\right) = 0 \\
    &\Leftrightarrow
   2 \sum_{i=1}^n \left[    
    P_{i,k} \cdot \left( y_i -  \sum_{k'=1}^K  P_{i,k'}  {\gamma}_{k'} \right)\right] = 0\\
   & \Leftrightarrow
   \sum_{i=1}^n 
        P_{i,k} \cdot   y_i
       =   \sum_{i=1}^n   \sum_{k'=1}^K    P_{i,k}  P_{i,k'}  \boldsymbol{\gamma}_{k'}.
\end{align*}
So that, if ${P}^T P$ is not singular:
\begin{align*}
& P^T   \boldsymbol{y}
       =  {P}^T   P\boldsymbol{\gamma} \, \Leftrightarrow \boldsymbol{\gamma} = \left(P^T P\right)^{-1}  P^T  \boldsymbol{y},
\end{align*}
which is a minimum. In Section \ref{sec:invert}, we derive assumptions under which ${P}^T P$ is indeed not singular. In practice, one can always use the pseudo-inverse of ${P}^T P$, which we will do in our experiments. Lastly, note that $\hat{\boldsymbol{\gamma}}$ depends on the regions through the matrix $P$. 

\noindent \textbf{Estimating $(R_k)_{1 \le k \le K}$ --} The regions are constructed in a way similar to the construction process of standard trees with the difference that here, during the construction process of the regression tree, the number of regions $K$ is not fixed and increases step by step, which implies that the size of the matrix $P$ is also changing during the iterative process. Let us assume that $K$ regions have already been identified, meaning that the current tree has $K$ leaves, each leaf corresponding to a region (\textit{i.e.}, hyper-rectangle). 
As in standard regression trees, each \textit{current}  region $R_k, \, 1 \le k \le K$, can be decomposed into two sub-regions wrt a variable $X^j$ and a splitting point $s_k^j$, for $1\leq j \leq p$:
\begin{equation*}
\left\{
\begin{array}{l}
R^j_{k,L}=\{\boldsymbol{x}\in R_k|\, X^j<s_k^j \} \\
R^j_{k,R}=\{\boldsymbol{x} \in R_k|\,X^j>s_k^j\}.
\end{array}
\right.
\end{equation*}
This decomposition adds a new region  for which the associated elements, $P\in M_{n,K+1}(\mathbb{R})$ and  $\boldsymbol{\gamma} \in \mathbb{R}^{K+1}$ can readily be computed. $P$ has now $K+1$ regions corresponding to the current regions $R_{k'}$ (with $k' \ne k$) and the two new regions $R^j_{k,L}$ and $R^j_{k,R}$. For each current region $R_k$, one is looking for the best split, \textit{i.e.}, the best variable $X^j$ and the best splitting point $s^j_k$ that minimize:
\begin{align*}
\sum_{i=1}^n \left(y_i - \sum_{l=1}^{K+1} \gamma_l {P}_{i,l}\right)^2.
\end{align*}
The sum includes all possible observations as each observation has a non null probability to belong to any region. 
Using Eq. \eqref{notre_estimateur_gamma_k}, the above problem can be reformulated, for each current region $R_k$, as:
\begin{align}
\underset{1\leq j \leq p, s \in \mathcal{S}_k^j}{\operatorname{argmin}} \left\{  \sum_{i=1}^n \left(y_i - \sum_{l=1}^{K+1} \left(\left(P^T P\right)^{-1}  P^T  \boldsymbol{y}\right)_l {P}_{i,l}\right)^2 \right\},
\label{min-split}\end{align}
where $\mathcal{S}_k^j$ denotes the set of splitting points corresponding to the middle points of the $j^{\text{th}}$ coordinates of the observations in $R_k$ sorted according to $X^j$.

The decomposition corresponding to the best split is then used to build the child nodes of $R_k$, which is replaced, in the set of current regions, by its two children. In this process, that is repeated till a stopping criterion is met\footnote{Any standard stopping criterion can be used here.}, the number of regions, the matrix $P$ and the weights $\boldsymbol{\gamma}$ are gradually updated. Section~\ref{sec:algo} provides the algorithm corresponding to this construction.

\noindent \textbf{Estimating $\boldsymbol{\sigma}_{\boldsymbol{U}}^2$ --} Lastly, the vector of variances, $\boldsymbol{\sigma}_{\boldsymbol{U}}^2$,
can either be set according to some high level principles, or can be learned through a grid search on a validation set guaranteeing. The latter is more demanding wrt computational resources, but is likely to lead to better results. However, in our experiments, we use the former strategy, with the aim of showing that our approach is robust in the sense that it yields good results even when $\boldsymbol{\sigma_U}$ is set \textit{a priori}.

\subsection{Final prediction}

For a new observation $\boldsymbol{x}\in \mathbb{R}^p$, one first computes its distribution over all the obtained regions:
$$ \forall k, \, 1\leq k \leq K, \, P^{\boldsymbol{x}}_k = \mathbb{P}\left(\boldsymbol{U} \in {R}_{k}| \boldsymbol{X} = \boldsymbol{x}, {\boldsymbol{\sigma}}_{\boldsymbol{U}}^2 \right) .$$
The prediction is then a direct application of \eqref{treeUncertainty}:
\begin{align}
    T_{\text{un}}\left(\boldsymbol{x}; \hat{\Theta}\right) = \sum_{k=1}^K \hat{\gamma}_k P^{\boldsymbol{x}}_k,
  \label{pred:untree}
\end{align}
where $\boldsymbol{\gamma}$, $(R_k)_{1 \le k \le K}$ and $\boldsymbol{\sigma}_{\boldsymbol{U}}^2$ are estimated as described above.
\subsection{A sufficient condition on the invertibility of $P^T P$}
\label{sec:invert}
 
The formula used for the construction of the tree relies on the inverse on the matrix $P^T P\in M_{K,K}(\mathbb{R})$. Even if numerically the Moore-Penrose pseudo-inverse might be used to approximate this inverse, we provide here a sufficient condition on the invertibility of $P$. Without loss of generality, the regions involved in the definition of the regression tree are of the form $R_k =\prod_{j=1}^p [a_k^j,b_k^j]$ for $1\leq k \leq K$, where $(a_k^j, b_k^j) \in (-\infty, +\infty)^2$. As usual, we denote $q_\alpha$ the $\alpha$ quantile of the standard Gaussian distribution. The general invertibility result is stated in Theorem~\ref{th:gen}.
\begin{theorem}\label{th:gen}
With the model defined in \eqref{model}, if the following assumption is satisfied:
     \[
     \forall j, \, 1\leq j \leq p, \,
     \sigma_{U_j} < \frac{\min\limits_{1\leq k\leq K}(b_k^j-a_k^j)}{2 q_{\frac{1+0.5^{\frac{1}{p}}}{2}}},
     \]
then the matrix $P^TP$ is invertible.
\end{theorem}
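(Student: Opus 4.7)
The plan is to prove this by showing $P$ has full column rank $K$, since $P^TP$ is invertible if and only if $\ker(P)=\{0\}$. I would extract a $K\times K$ submatrix $Q$ of $P$ by selecting, for each leaf $R_k$, a representative training observation $\boldsymbol{x}_{i_k}\in R_k$; such an observation exists because each leaf is obtained by splitting on the training data. Invertibility of $Q$ then forces $P$ to have rank $K$.

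For invertibility of $Q$, I would aim for strict diagonal dominance. Since the leaves partition $\mathbb{R}^p$, every row of $P$ (and hence of $Q$) is a probability vector with nonnegative entries summing to $1$, so strict diagonal dominance reduces to the single estimate $Q_{k,k}>1/2$ for every $k$.

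To derive this bound from the hypothesis, set $u_j=(x_{i_k}^j-a_k^j)/\sigma_{U_j}$ and $v_j=(b_k^j-x_{i_k}^j)/\sigma_{U_j}$. Then $Q_{k,k}=\prod_{j=1}^p[\Phi(u_j)+\Phi(v_j)-1]$ and $u_j+v_j=(b_k^j-a_k^j)/\sigma_{U_j}>2c_p$, where $c_p=q_{(1+0.5^{1/p})/2}$ satisfies $2\Phi(c_p)-1=0.5^{1/p}$. When the representative can be chosen with $\min(u_j,v_j)\geq c_p$ in every coordinate, each factor in the product is at least $2\Phi(c_p)-1=0.5^{1/p}$, so that $Q_{k,k}\geq (0.5^{1/p})^p=0.5$, strictly by the strict inequality on $\sigma$.

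The main obstacle I anticipate is ensuring a \emph{sufficiently central} representative exists in every leaf. The hypothesis guarantees each $R_k$ is wide enough in every coordinate (width $>2c_p\sigma_{U_j}$) that its geometric center is at distance $>c_p\sigma_{U_j}$ from both boundaries, but it does not guarantee that a training observation lives in this central region. Closing this gap — through a more clever choice of the $K\times K$ submatrix, a refinement of the diagonal-dominance argument that accounts for boundary-lying observations, or a mild genericity assumption on the training sample — will be the technical heart of the proof.
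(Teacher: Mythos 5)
Your plan follows the paper's proof step for step: reduce invertibility of $P^TP$ to $P$ having full column rank, extract a $K\times K$ submatrix $Q$ by choosing one representative observation per leaf, note that each row of $Q$ is a sub-probability vector summing to at most one (the leaves partition $\mathcal{X}$), so that strict row diagonal dominance --- hence invertibility by the strict-dominance criterion the paper attributes to Hadamard --- reduces to $Q_{k,k}>1/2$, which is obtained coordinate-wise by forcing each Gaussian factor above $0.5^{1/p}$ through the quantile $q_{(1+0.5^{1/p})/2}$. All of this is exactly Proposition~\ref{prop} and the subsequent argument in the paper, and your computations ($Q_{k,k}=\prod_j[\Phi(u_j)+\Phi(v_j)-1]$, each factor at least $2\Phi(c_p)-1$) are correct.

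The obstacle you flag at the end is not something the paper resolves and you missed; it is precisely where the paper's own argument breaks down. From the position of $x_i^j$ inside $R_k$ the genuinely sufficient condition is $\sigma_{U_j}<\min(x_i^j-a_k^j,\,b_k^j-x_i^j)/q_{(1+0.5^{1/p})/2}$, and the paper then claims that $\sigma_{U_j}<(b_k^j-a_k^j)/(2q_{(1+0.5^{1/p})/2})$ is ``even more conservative.'' This is backwards: since $\min(x_i^j-a_k^j,b_k^j-x_i^j)\le(b_k^j-a_k^j)/2$, the half-width condition is the \emph{weaker} one, and for a representative lying near a face of $R_k$ the corresponding factor can fall below $0.5^{1/p}$ (for $p=1$, an observation at the left endpoint of an interval of width $2q_{0.75}\sigma$ gives a factor close to $\Phi(2q_{0.75})-0.5\approx 0.41<0.5$). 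The paper also silently assumes that every leaf contains at least one interior observation so that the sets $I_k$ are non-empty, which is not part of the theorem's hypotheses. So the ``technical heart'' you identify is a genuine gap in the published proof as well; closing it requires exactly one of the fixes you name (a centrality assumption on the representatives, or restating the hypothesis in terms of $\min_i\min(x_i^j-a_k^j,b_k^j-x_i^j)$ rather than the half-width). One further minor point in your favour: the paper's displayed diagonal-dominance inequality actually computes a column sum while invoking the row-sum identity $\sum_{k'\neq k}P_{i_k,k'}=1-P_{i_k,k}$; the row-dominance version you use is the correct one.
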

Roughly speaking, the matrix $P^T P$ is invertible provided that the standard deviation $\sigma_{U_j}$ is sufficiently small for all $1\leq j \leq p$. The smaller the regions and/or the larger the number $p$ of input variables, the lower the uncertainty on the measurement has to be to ensure the invertibility of the matrix $P^T P$.

To prove Theorem~\ref{th:gen}, we prove that $P$ is of full rank. To do so, we first prove the following result:

\begin{proposition}
\label{prop}
Let us fix $1\leq k \leq K$ and consider $1\leq i \leq n$ such that $a_k^j<x_i^j<b_k^j$. Assume that 
\begin{equation}\label{assSigma}
    \forall 1\leq j \leq p,\, \sigma_{U_j}<\frac{b_k^j-a_k^j}{2q_{\frac{1+0.5^{1/p}}{2}}}.
\end{equation}
Then, $P_{i,k}>0.5$.
\end{proposition}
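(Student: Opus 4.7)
The plan is to exploit the product decomposition of $P_{i,k}$ across coordinates coming from the independence assumption in \eqref{model}, bound each coordinate factor from below by $0.5^{1/p}$, and then multiply to obtain $P_{i,k}>0.5$.

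First, from the definition of $P_{i,k}$ and the fact that $U_i^j\mid X_i^j=x_i^j\sim\mathcal{N}(x_i^j,\sigma_{U_j}^2)$, I would rewrite
\[
P_{i,k}=\prod_{j=1}^{p}\Bigl[\Phi\!\Bigl(\tfrac{b_k^j-x_i^j}{\sigma_{U_j}}\Bigr)-\Phi\!\Bigl(\tfrac{a_k^j-x_i^j}{\sigma_{U_j}}\Bigr)\Bigr],
\]
where $\Phi$ denotes the standard normal c.d.f. This reduces the claim to a collection of one-dimensional Gaussian-mass estimates, one per coordinate $j$.

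Next, setting $L_j=b_k^j-a_k^j$ and $\alpha=(1+0.5^{1/p})/2$, I would rephrase the hypothesis \eqref{assSigma} as $L_j/(2\sigma_{U_j})>q_\alpha$, so that $\Phi(L_j/(2\sigma_{U_j}))>\alpha$ and hence $2\Phi(L_j/(2\sigma_{U_j}))-1>2\alpha-1=0.5^{1/p}$. The key single-coordinate step is then to show that the factor
\[
\Phi\!\Bigl(\tfrac{b_k^j-x_i^j}{\sigma_{U_j}}\Bigr)-\Phi\!\Bigl(\tfrac{a_k^j-x_i^j}{\sigma_{U_j}}\Bigr)
\]
is at least $2\Phi(L_j/(2\sigma_{U_j}))-1$, using the strict interiority $a_k^j<x_i^j<b_k^j$ together with the symmetry and unimodality of the Gaussian density to compare the (asymmetric) mass of $R_k^j$ with the symmetric benchmark of an interval of the same length centered at $x_i^j$. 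Once this one-dimensional bound is available, each factor exceeds $0.5^{1/p}$ and taking the product over $j=1,\ldots,p$ yields $P_{i,k}>(0.5^{1/p})^p=0.5$, as required.

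The main obstacle is precisely this symmetric-benchmark comparison: since $x_i^j$ is allowed to be anywhere in the open interval $(a_k^j,b_k^j)$, the interval $R_k^j$ is in general not symmetric about $x_i^j$, and one must argue carefully — using the monotonicity of $\Phi$ and properties of the Gaussian density — that no location of $x_i^j$ inside $R_k^j$ can shrink the factor below the symmetric-case value $2\Phi(L_j/(2\sigma_{U_j}))-1$. With that step in place, the rest of the argument is a routine application of the hypothesis and the independence-based product form.
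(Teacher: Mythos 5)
Your overall route is the same as the paper's: reduce $P_{i,k}>0.5$ to the per-coordinate bound $\mathbb{P}(U_i^j\in R_k^j\mid X_i^j=x_i^j)>0.5^{1/p}$ and take the product over $j$. The gap is exactly the step you flag as ``the main obstacle,'' and it is not a technical detail that careful arguing can repair: the inequality you need is false. Writing $\Phi$ for the standard Gaussian c.d.f., $s=(b_k^j-x_i^j)/\sigma_{U_j}>0$, $t=(x_i^j-a_k^j)/\sigma_{U_j}>0$ and $L_j=b_k^j-a_k^j$, the coordinate factor equals $\Phi(s)+\Phi(t)-1$, and your claimed lower bound reads $\Phi(s)+\Phi(t)\geq 2\Phi\bigl(\tfrac{s+t}{2}\bigr)$. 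But $\Phi$ is strictly concave on $[0,\infty)$ (its second derivative is $-u\,e^{-u^2/2}/\sqrt{2\pi}<0$ for $u>0$), so the inequality goes the other way: the interval of length $L_j$ \emph{centered} at $x_i^j$ carries the \emph{maximal} Gaussian mass, and the symmetric benchmark $2\Phi\bigl(L_j/(2\sigma_{U_j})\bigr)-1$ is an upper bound, not a lower bound, for the factor. Concretely, for $p=1$ the factor tends to $\Phi(L_j/\sigma_{U_j})-0.5<0.5$ as $x_i^j\downarrow a_k^j$, whatever $\sigma_{U_j}>0$ is; no hypothesis of the form \eqref{assSigma}, which does not control the distance of $x_i^j$ to the boundary of $R_k^j$, can therefore force $P_{i,k}>0.5$.

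For what it is worth, the paper's own proof stumbles at the very same point. It correctly derives the sufficient condition $\sigma_{U_j}<\min\bigl(x_i^j-a_k^j,\,b_k^j-x_i^j\bigr)/q_{\frac{1+0.5^{1/p}}{2}}$, but then asserts that $\sigma_{U_j}<(b_k^j-a_k^j)/\bigl(2q_{\frac{1+0.5^{1/p}}{2}}\bigr)$ is ``even more conservative''; since $\min(x_i^j-a_k^j,\,b_k^j-x_i^j)\leq (b_k^j-a_k^j)/2$, that replacement enlarges the set of admissible $\sigma_{U_j}$ and so weakens the condition rather than strengthening it. The proposition becomes correct if \eqref{assSigma} is restated with $\min(x_i^j-a_k^j,\,b_k^j-x_i^j)$ --- the distance of $x_i^j$ to the boundary of $R_k^j$ --- in place of $(b_k^j-a_k^j)/2$; under that hypothesis your argument goes through once you drop the symmetric-benchmark comparison and instead bound the two tails separately via the one-sided quantile conditions, as the paper does up to its final step.
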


\begin{proof}
Observe that a sufficient condition to have $P_{i,k}>0.5$ is
\[
\forall j,\, \frac{1}{\sigma_{U_j}\sqrt{2\pi}}\int_{R_k^j}e^{-\frac{(u-x_i^j)^2}{2\sigma_{U_j}^2}}du>0.5^{1/p}.
\]
We now search a sufficient condition to have the inequality just above. To get this inequality the following condition is sufficient: for all $1\leq j\leq p$,
\begin{align}
&\mathbb{P}\left(\frac{U_i^j-x_i^j}{\sigma_{U_j}}<\frac{b_k^j-x_i^j}{\sigma_{U_j}}|X_i=x_i\right) \nonumber\\
&-\mathbb{P}\left(\frac{U_i^j-x_i^j}{\sigma_{U_j}}<\frac{a_k^j-x_i^j}{\sigma_{U_j}}|X_i=x_i\right)>0.5^{1/p}.\label{eq:diff-proba}
\end{align}
Note that this last condition is satisfied if we have, for all $1\leq j \leq p$,
 \begin{equation*}
    \left\{
\begin{array}{l}
   \mathbb{P}\left(\frac{U_i^j-x_i^j}{\sigma_{U_j}}<\frac{b_k^j-x_i^j}{\sigma_{U_j}}|X_i=x_i\right)>\frac{1+0.5^{1/p}}{2},\\
   \mathbb{P}\left(\frac{U_i^j-x_i^j}{\sigma_{U_j}}<\frac{a_k^j-x_i^j}{\sigma_{U_j}}|X_i=x_i\right)<\frac{1-0.5^{1/p}}{2},
\end{array}
\right.
\end{equation*}
which is equivalent to
\[
\frac{a_k^j-x_i^j}{\sigma_{U_j}}<q_{\frac{1-0.5^{1/p}}{2}}\mbox{ and }\frac{b_k^j-x_i^j}{\sigma_{U_j}}>q_{\frac{1+0.5^{1/p}}{2}}.
\]
Note that $q_{\frac{1-0.5^{1/p}}{2}}=-q_{\frac{1+0.5^{1/p}}{2}}<0$. A sufficient condition is then, for all $1\leq j \leq p$,
\[
\sigma_{U_j}<\min_{1\leq k \leq K} \left\{  \frac{\min\left(x_i^j-a_k^j,b_k^j-x_i^j\right)}{q_{\frac{1+0.5^{1/p}}{2}}}\right\}.
\]
Since  $a_k^j<x_i^j<b_k^j$, a condition even more conservative is the following:
\[
\forall j,\, \sigma_{U_j}<\frac{b_k^j-a_k^j}{2q_{\frac{1+0.5^{1/p}}{2}}}.
\]
This concludes the proof.
\hfill $\blacksquare$
\end{proof}

We assume in the following that Assumption \eqref{assSigma} is satisfied. Then, the set $I_k = \{1\leq i \leq n, P_{i,k} >0.5 \}$ is non-empty. Let us consider, for all $1\leq k\leq K$, $i_k$ a representative of this set and let us introduce the matrix $Q \in M_{K,K}(\mathbb{R})$ defined, for all $1\leq l,k \leq K$, by:
\[
Q_{l,k} = P_{i_l,k}.
\]

\begin{proposition}
Assume that Assumption \eqref{assSigma} holds. Then the $K\times K$ matrix $Q$ is invertible.
\end{proposition}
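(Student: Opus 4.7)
The plan is to show that $Q$ is strictly diagonally dominant and invoke the Levy--Desplanques theorem (equivalently, a direct Gershgorin argument).

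First I would record the partition property: the regions $R_1,\dots,R_K$ form a partition of $\mathbb{R}^p$, so for every observation index $i$ we have $\sum_{k=1}^K P_{i,k}=1$. Applied to the representatives, this yields $\sum_{k=1}^K Q_{l,k}=\sum_{k=1}^K P_{i_l,k}=1$ for each $l$.

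Next I would use Proposition~2 on the diagonal: by construction $i_l\in I_l$, so $Q_{l,l}=P_{i_l,l}>0.5$. Combining with the row-sum identity, for each $l$,
\[
\sum_{k\neq l}|Q_{l,k}| \;=\; 1-Q_{l,l} \;<\; 0.5 \;<\; Q_{l,l}\;=\;|Q_{l,l}|,
\]
which is the strict diagonal dominance condition. The Levy--Desplanques theorem (or Gershgorin's disk theorem, noting that $0$ lies outside every disk centered at $Q_{l,l}>0.5$ with radius $<0.5$) then gives that $Q$ is non-singular.

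There is no serious obstacle here; the only subtlety is making sure the partition property is actually available. It is, since the tree-construction process always maintains $(R_k)_k$ as a partition of $\mathbb{R}^p$ (splits replace a leaf region by two sub-regions whose union is the parent), and the Gaussian law of $\boldsymbol{U}\mid\boldsymbol{X}$ on $\mathbb{R}^p$ integrates to $1$ over this partition. With the proposition in hand, the final invertibility theorem follows immediately: $P$ contains the $K\times K$ invertible submatrix $Q$ (rows indexed by $i_1,\dots,i_K$), hence $\operatorname{rank}(P)=K$, hence $P^TP$ is invertible.
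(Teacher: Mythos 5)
Your proof is correct and follows essentially the same route as the paper: strict diagonal dominance of $Q$ via the row-sum identity $\sum_k P_{i,k}=1$ and the bound $Q_{l,l}>0.5$ from Proposition~\ref{prop}, then Levy--Desplanques (the paper calls it Hadamard's Lemma). If anything, your bookkeeping is cleaner --- the paper's displayed chain mixes a column sum $\sum_{l\neq k}Q_{l,k}$ with the row-sum identity, whereas you apply the partition property consistently row by row.
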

\begin{proof}
We first show that $Q$ is a strictly dominant diagonal matrix, \textit{i.e.}:
\[
\forall k,\,Q_{kk}>\sum_{k'\neq k}Q_{kk'}.
\]
Indeed, by  Proposition \ref{prop}, we know that $P_{i_k,k}>0.5$ and $k$ is the only region where it is true:
\[
\sum_{l\neq k}Q_{l,k}\leq \sum_{l\neq k}P_{i_l,k} = 1 - P_{i_k,k} <0.5<P_{i_k,k}=Q_{k,k}.
\]
According to Hadamard's Lemma, we know that a strictly dominant diagonal matrix is invertible, which concludes the proof.
\hfill $\blacksquare$
\end{proof}
As $Q$ is invertible, $P$ is of full rank, leading to the fact that $P^T P$ is invertible which concludes the proof of Theorem \ref{th:gen}.

\subsection{Extension to Random Forests}

It is straightforward to use the uncertain regression trees introduced above in Random Forests, leading to \textit{Uncertain Random Forests}. Indeed, each tree of the forest is now an uncertain regression tree that can be constructed as outlined before. Assuming a forest of $\tau$ uncertain trees and denoting $\hat{\boldsymbol{\gamma}}^t, \, 1 \le t \le \tau$, the weights estimated for each tree and $P^{\boldsymbol{x},t}_{k_t}$ the probability distribution of a new observation $\boldsymbol{x}\in \mathbb{R}^p$ over the $K_t$ regions ($1 \le k_t \le K_t$) of the $t^{\text{th}}$ tree ($1 \le t \le \tau$) of the forest, the prediction rule of the uncertain random forest takes the form:
\begin{align}
    RF_{\text{un}}\left(\boldsymbol{x}; \hat{\Theta}_{t, \, 1 \le t \le \tau}\right) = \frac{1}{\tau} \sum_{t=1}^{\tau} \sum_{k_t=1}^{K_t} \hat{\gamma}^t_{k_t} P^{\boldsymbol{x},t}_{k_t}.
  \label{pred:unrf}
\end{align}
As one can note, the above prediction rule is a direct extension of Eq. \eqref{pred:untree}.
 
\section{Associated algorithms}
\label{sec:algo}

Algorithm \ref{algo:buid_tree} describes the construction of uncertain regression trees. This construction parallels the one of standard regression trees except that we consider a matrix $P$ encoding the probability distribution of training data points over regions, which is built dynamically (as the regions and the weights) by adding a new column when a given region is split into two sub-regions.

Each time a region $R$, corresponding to a current leaf of the tree being constructed, is considered (through the pop function in Algorithm \ref{algo:buid_tree}), one identifies the best split $(j^\star, s^\star)$ that minimizes the empirical risk in Eq. \eqref{min-split} among all possible splitting points in $\mathcal{S}_j(R)$ of each variable $j$. The set $\mathcal{S}_j(R)$ is defined by $\mathcal{S}_j(R) = \{ (x_{i_{l+1}}^j+x_{i_{l}}^j)/2 \mid  {1\leq l \leq r-1}\}$, with $(\boldsymbol{x}_{i_1}, \ldots, \boldsymbol{x}_{i_r})$ corresponding to the $r$ observations of the learning set $\mathcal{L}_n$ belonging to the region $R$, sorted such that $x_{i_1}^j \leq x_{i_2}^j \leq \cdots \leq x_{i_r}^j$.

The $k$-th column of $P$ corresponding to the current region is finally replaced by the probability distribution of the training data points over its \textit{left sub-region} (denoted $R_L$) and an additional column is added to $P$ for the \textit{right sub-region} (denoted $R_R$). The weights $\hat{\boldsymbol{\gamma}}$ are easily deduced from $P$ at each step through \eqref{notre_estimateur_gamma_k}.

The algorithm finally outputs the set of regions $\mathcal{R}(S)$ corresponding to the leaves of the tree and the weights $\hat{\boldsymbol{\gamma}}$.

\begin{algorithm}
{\textit{Uncertain regression trees}}

    \noindent \textbf{Input:}     ${\cal L}_n$ 
    
    \noindent\textbf{Initialization:} $K=1, \, S \leftarrow (1,\mathcal{X}), \, P = \mathbf{1}_n$
    
    \noindent\textbf{while} stopping criterion not met
    
            $(k, R) = S.\text{pop}()$
            
            \textbf{for} $j = 1 \text{ to } p$
            
            \hspace{\parindent} Construct $\mathcal{S}_j(R)$ 
            
            \hspace{\parindent} \textbf{for} $ s \in \mathcal{S}_j(R)$
            
            \hspace{\parindent} \hspace{\parindent} $R_L = \{\boldsymbol{x} \in R, x_j \leq s\}$
            
            \hspace{\parindent} \hspace{\parindent} $R_R = \{\boldsymbol{x} \in R, x_j \geq s\}$
            
            \hspace{\parindent} \hspace{\parindent} $Q_L = (\mathbb{P}(U_i \in R_L|X_i = x_i))_{1\leq i \leq n}$ 
              
            \hspace{\parindent} \hspace{\parindent} $Q_R = (\mathbb{P}(U_i \in R_R|X_i = x_i))_{1\leq i \leq n}$
            
            \hspace{\parindent} \hspace{\parindent}$P^{[j,s]} \leftarrow$ Merge $\{ P[, 1:k-1], Q_L, $
            
             \hspace{\parindent} \hspace{\parindent}\hspace{\parindent} \hspace{\parindent} \hspace{\parindent} \hspace{\parindent} $P[, k+1:K], Q_R \} $
            
             \hspace{\parindent} \textbf{end for}
            
             \textbf{end for}
            
            $(j^\star, s^\star)=\underset{1\leq j \leq p, s \in \mathcal{S}_j(R)}{\operatorname{argmin}} \text{risk}(P^{[j,s]})$ \#defined in \eqref{min-split}
            
            $P = P^{[j^\star, s^\star]}$
        
            Update $\hat{\boldsymbol{\gamma}}$ acc. to Eq.~\ref{notre_estimateur_gamma_k}
            
            K = K+1
            
            S.append((k,$R_L$), (K, $R_R$))
            
            \noindent{\textbf{end while}}

    \noindent \textbf{Output:} $(\mathcal{R}(S),\hat{\boldsymbol{\gamma}})$
    \label{algo:buid_tree}

\end{algorithm}

Note that in this version the tree is constructed in a depth-first manner, and that the usual stopping criteria for regression trees can be used (as imposing a minimum number of observations in each leaf or a maximal depth for the tree).

Algorithm \ref{algo:predict} provides the pseudo-code for the prediction rule given in  \eqref{pred:untree} for a new covariate $\boldsymbol{x}^\text{pred}$. 

\begin{algorithm}{\textit{Prediction}}

    \noindent \textbf{Input:}  $(R_{k})_{1 \le k \le K}, \, \hat{\boldsymbol{\gamma}}, \, \boldsymbol{x}^{\text{pred}}, \, {\boldsymbol{\sigma}}_U^2 $

    \noindent $\hat{y}^{\text{pred}} = 0 $
    
    \noindent \textbf{for} $k=1$ to $K$  
    
    $\hat{y}^{\text{pred}} +=  \mathbb{P}(\mathbf{U} \in R_k | \mathbf{X} = \boldsymbol{x}^{\text{pred}}, {\boldsymbol{\sigma}}_U^2) \hat{\gamma}_k$

 \noindent \textbf{end for}
            
    \noindent \textbf{Output:} $\hat{y}^{\text{pred}}$ 
    \label{algo:predict}

\end{algorithm}

\section{Experimental validation}
\label{sec:exps}

The goal of our experiments is to assess how uncertain regression trees:
\begin{itemize}
    \item compare to standard regression trees,
    \item behave in \textit{Uncertain} Random Forests.
\end{itemize}
By standard regression trees we mean here regression trees based on the quadratic risk and the prediction rule given in Eq. \eqref{treeClassical}. In addition, we consider a trade-off between standard and uncertain regression trees based on standard trees (and thus avoiding the complex construction process outlined before) but using the prediction rule given in \eqref{pred:untree}. The matrix $P$ and the weights $\boldsymbol{\gamma}$ are thus computed only once, when the standard trees have been built. The rationale for this method is to rely on the strengths of the two approaches: a simple construction tree process and a robust prediction rule. As one can conjecture, this method yields results in between the two approaches.

We make use in our experiments of four benchmark data sets commonly used in regression tasks and described in Section~\ref{sec:data}. We first use these data sets without any modification, to illustrate the fact that \textit{real} data sets contain uncertain inputs. The uncertain regression trees proposed here indeed outperform standard trees and Random Forests on these data sets, as shown in Section~\ref{sec:bench}. We then modify these data sets by adding a uniform noise bounded by a quantity proportional to the empirical variance of the data. This additional perturbation aims at assessing the robustness of the different methods (standard and uncertain trees) in situations where the input data is highly uncertain. Once again, the results obtained show the good behavior of the uncertain trees and Random Forests (Section~\ref{sec:arti}). In all our experiments, the results are evaluated with the Root Mean Squared Errors (RMSE), which is a standard evaluation measure in regressions tasks. To ensure that all the available data is used for both training and testing, we further rely on 5-fold cross-validation and report the mean RMSE and its standard deviation over the 5 folds.

The stopping criterion for the trees, both standard and uncertain, is based on the fact that all leaves should contain at least ten percent of the training data. For Random Forests, three features are randomly selected for constructing a tree, which roughly corresponds to one third of the features on the data sets considered, a standard proportion in Random Forests.

In this study, the vector of variances is fixed according to some high level principle. In particular, when no additional noise is introduced on the data, the variance for a particular variable $U^j|X^j$ is set to the empirical variance of $X^j$ (in other words, we assume that the variance of the \textit{true} values is of the same order as the variance of the observed values). When some noise is added to the data, the variance of $U^j|X^j$ is set to one half of the variance of the observed, noisy data (in this case, the variance of the true values should be lower than the empirical variance of the observed values; we arbitrarily chose one half in this study).

Lastly, our algorithms have been implemented from \texttt{scikit-learn} \cite{Scikit-learn}, using Cython \cite{Behnel}.

\subsection{Data Sets} 
\label{sec:data}
Experiments are conducted on 4 publicly available data sets, popular in the regression tree literature.  The main characteristics of these data sets are summarized in Table~\ref{tab:data set-description}.
As we focus in this paper on quantitative variables, we simply deleted the categorical variables from the data sets.
Several applications are considered, among which environment (concentration in ozone over a day for the data set \textit{Ozone} introduced in \cite{cornillon}), health (data set \textit{Diabete}, introduced in \cite{Scikit-learn} and used in \cite{LAR}), economy (data set \textit{BigMac} about price of sandwiches, available in R package \verb?alr3?  and used in \cite{meinshausen2006quantile}) or biology (data set \textit{Abalone} introduced in \cite{Warwick} and used in \cite{meinshausen2006quantile} among others).

\begin{table}[ht!]
  \centering
      \begin{tabular}{ccc}
        \hline
        Data set & sample size $n$ & number of variables $p$ \\
        \hline
        BigMac & 69&9 \\
        Ozone & 112 & 9 \\
        Diabete & 442 & 10 \\
        Abalone & 500 & 7\\
        \hline
      \end{tabular}
\caption{Characteristics of data sets used in our experiments, ordered by sample size $n$.}
\label{tab:data set-description}
\end{table}
    
\subsection{Results on benchmark data sets}
\label{sec:bench}
As mentioned in the introduction, data sets are by nature uncertain, so we illustrate our method on the four data sets introduced in Section \ref{sec:data}. 
The noise reflects the uncertainty, so we propose to use the empirical standard deviation vector as the input parameter $\boldsymbol{\sigma}_U$. Table~\ref{tab:data set-RMSE - trees} displays the results obtained for each data set in terms of mean and standard deviation of RMSE using 5-fold cross validation.
Standard trees, standard Random Forests (with $\tau$ = 100 trees) as well as uncertain trees and standard trees with uncertain prediction are compared.

\begin{table*}[ht!]
  \centering
      \begin{tabular}{ccccc}
         Data sets & {BigMac}& {Ozone} & {Diabete}   & {Abalone}  \\
        \hline
        \hline
         Standard tree  & 25.09 (12.3) & 17.82 (4.3) & 60.29 (4.3) & 2.70 (0.3) \\
         Standard RF, $\tau = 100$ & 19.78 (11.0) & 15.86 (4.1) & 58.18 (4.3) & 2.65 (0.4) \\
        Standard tree with uncertain prediction  & 21.49 (8.7) & 16.79 (4.1) & 57.05 (3.7)&  2.41 (0.2) \\
        Uncertain tree & 18.74 (8.9) & 15.39 (3.7) & 56.56 (3.3) &  2.33 (0.3)  \\
        \hline
      \end{tabular}
\caption{Average RMSE based on 5-fold cross-validation for the 4 benchmark data sets. Standard deviations are given in parentheses. For each  uncertain tree based method, $\boldsymbol{\sigma}_{U}$ are set to the empirical standard deviations of the observed input variables.}
\label{tab:data set-RMSE - trees}
\end{table*}

For all data sets, the best performances are achieved by uncertain trees.
As expected, the standard Random Forest are performing better than considering only one tree. However, as one can see in Table \ref{tab:data set-RMSE - trees}, uncertain trees yield better results than standard Random Forests. Performances of standard trees with uncertain prediction are better than the ones of standard trees, however not always better than the ones of standard Random Forests.

Finally, one can note that the standard deviation of uncertain trees is smaller than the standard deviation of other methods, meaning that uncertain trees yield more stable results.

\subsection{Results on artificial uncertain data sets}
\label{sec:arti}
To assess the robustness of uncertain trees to and uncertain Random Forests to uncertainty in the input variable, we add artificial noise, which could correspond to some measure error, to the input observations.
By doing so, one can consider that the variability of the data is coming from two sources, on the one hand the variance in the latent variables (related to the variance of $\mathbf{U}$ with the notations of Section \ref{sec:model}) and on the other hand the variance of the uncertainty (related to the variance of $\mathbf{X} | \mathbf{U}$), both leading to the variance of the observations.
Then, we assume here that $\boldsymbol{\sigma}_U$ is smaller than  the standard deviation of the observations $\boldsymbol{\sigma}_X$, which can be estimated through the data set. Basically it means that the main part of the variance is due to the uncertainty.

To construct artificial uncertain data sets satisfying this condition, we consider in this section \textit{BigMac}, \textit{Ozone}, \textit{Diabete} and \textit{Abalone} data sets introduced in Section \ref{sec:data}.
A noise is added to each observation using the following process. For each observation from an input variable $X^j$, $1\leq j \leq p$, we add a noise generated from the product of a Rademacher variable and a uniform variable coming from the interval $[\hat{\sigma}_{X_j}/10, \hat{\sigma}_{X_j}/4]$.

We consider here standard trees, standard Random Forests with $\tau = 500$ trees, standard trees with uncertain prediction, uncertain trees and uncertain Random Forests with $\tau = 15$ trees. The results obtained are displayed in Table \ref{tab:data set-RMSE - RF}.

\begin{table*}[ht!]
  \centering
      \begin{tabular}{ccccc}
        Uncertain data sets  & {BigMac} &{Ozone} &{Diabete} &{Abalone} \\
        \hline
        \hline
        Standard tree & 22.28 (8.7) & 19.37 (4.1) & 60.47 (2.81) & 2.54 (0.17)\\
        Standard tree with uncertain prediction & 21.68 (9.8) & 17.35 (4.9) & 57.92 (3.43) & 2.40 (0.15)\\
      Uncertain tree & 19.28 (13.4) & 16.87 (6.0) & 58.56 (3.45) & 2.34 (0.20)\\
      \hline
         Standard RF, $\tau = 500$ & 19.25 (7.8) & 15.72 (3.0) & 59.55 (4.39) & 2.64 (0.18) \\
        Uncertain RF, $\tau = 15$ &18.06 (9.3)& 15.49 (3.7) & 55.66 (4.31) & 1.98 (0.12) \\
        \hline
      \end{tabular}
\caption{Average RMSE based on 5-fold cross-validation for the 4 modified data sets. Standard deviations are given in parentheses. On each data set, each observation is modified by a noise generated from the product of a Rademacher variable and a uniform variable coming from the interval $[\frac{\hat{\sigma}_{X_j}}{10}, \frac{\hat{\sigma}_{X_j}}{4}]$. For each uncertain tree-based method, $\boldsymbol{\sigma}_U$ is to half of the empirical standard deviations of the observed (modified) input variables.}
\label{tab:data set-RMSE - RF}

\end{table*}

As one can note, uncertain trees outperforms here again standard trees. RMSE scores are a bit higher than in Table \ref{tab:data set-RMSE - trees}, which is not surprising given the noise added to the data sets.

Similarly, uncertain Random Forests outperform all the other methods, including standard Random Forests even though the number of trees is one order of magnitude less. 

\section{Discussion}
\label{sec:disc}

Regression trees have been introduced in the 1980s through the popular CART algorithm \cite{breiman1984classification}, notably allowing one to deal with both categorical and numerical input variables. They constitute the basic building block of state-of-the-art ensemble methods based on the combination of random trees, notably Random Forests introduced by Breiman in~\cite{breiman2001random} to circumvent the instability of Regression trees \cite{gey2006}. Since Random Forests are particularly well suited for Big Data analysis (see \cite{genuer2017}), many applications have been addressed in various domains with Random Forests, for example in ecology or genomics. \cite{verikas2011} provides a review of the use of Random Forests for data mining purposes. An interesting feature of Random Forests is the possibility to quantify the importance of input variables in the model obtained (see~\cite{genuer2010} for more details on that point). Another interesting feature, which was empirically established, is their robustness to noise. They are thus, to a certain extent, robust to uncertain inputs (even though no mechanism was specifically designed for that). This said, explicitly modelling the uncertainty as done in the uncertain regression trees proposed here allows one to outperform the standard version of  Random Forests, as illustrated in our experiments. 

Several adaptations of ensemble methods for quantile regression have been proposed, as quantile Random Forests or quantile boosting trees \cite{fenske2011identifying, kriegler2007boosting, kriegler2010small,meinshausen2006quantile,zheng2012}. These studies however focus on the uncertainty in the output variable (by producing conditional quantiles rather than a conditional mean) and not on the uncertainties in the inputs, as done here. It is of course possible to combine both approaches, which we intend to do in the future.

Lastly, the idea of including information on the uncertainty of the input data in a regression method is not new and is related to uncertainty propagation and sensitivity analysis. Several authors have indeed proposed to integrate uncertainties on the inputs in different regression methods, as multivariate linear models \cite{reis2005} or neural networks \cite{gal2016}. In each case, the methods have been improved, showing the benefits of explicitly modelling uncertainties in the input data. To the best of our knowledge, our approach is the first one to address this problem in the context of regression trees (and ensemble methods based on such trees). Our conclusion on the benefits of this approach parallels the ones of the above-mentioned studies.

\section{Conclusion}
\label{sec:concl}

We have introduced in this study uncertain regression trees with can deal with uncertain inputs. In such trees, observations no longer belong to a single region, but rather have a non-null probability to be assigned to any region of the tree. This extra flexibility leads nevertheless to a construction process that is more complex than the one underlying standard regression trees and that necessitates the inversion of a square matrix (for which we have theoretically provided a sufficient condition). In practice, we rely on the pseudo-inverse of this matrix. The experimental results fully justify the approach we have proposed and show that uncertain regression trees improve the results of standard regression trees and Random Forests on several benchmark data sets. A similar conclusion is drawn on artificial uncertain data sets obtained from the standard ones by introducing additional uncertainty in the form of a uniform noise.

The methodology developed in this study can also be adapted to the case where some input data are categorical. We plan to work on such an adaptation in the future, by considering different types of uncertainties.

Furthermore, as mentioned before, the vector of variances for modelling uncertainties can easily be learned by grid search on validation sets (typically using cross-validation). One can expect by doing so that the results would further improve. We have set this vector in our experiments to values that we believe are reasonable, so as to show that our approach is robust in the sense that it yields good results even when $\boldsymbol{\sigma_U}$ is set \textit{a priori}. We nevertheless plan to run additional experiments to learn this vector. As a grid search can be easily parallelized, this learning should not impact the running time of the algorithm.

Lastly, exploring the extension of our method to Random Forests or boosting trees, as in \cite{freund1999short,Ridgeway2007}, is another promising research direction. We also intend to explore the use of other empirical loss functions, as the quantile loss used in the definition of quantile Random Forests or quantile boosting trees~\cite{fenske2011identifying, kriegler2007boosting, kriegler2010small,meinshausen2006quantile}.

\end{document}